\providecommand{\algorithmname}{Algorithm}
\newcommand\footnoteref[1]{\protected@xdef\@thefnmark{\ref{#1}}\@footnotemark}
\renewcommand{\algorithmiccomment}[1]{\bgroup\hfill\scriptsize//~#1\egroup}
\def\tr{^{\top}}
\def\bfzero{\hbox{\bf 0}}
\def\bfa{{\mbox{\boldmath $a$}}}
\def\bfd{{\mbox{\boldmath $d$}}}
\def\bfe{{\mbox{\boldmath $e$}}}
\def\bff{{\mbox{\boldmath $f$}}}
\def\bfg{{\mbox{\boldmath $g$}}}
\def\bfp{{\mbox{\boldmath $p$}}}
\def\bfq{{\mbox{\boldmath $q$}}}
\def\bfr{{\mbox{\boldmath $r$}}}
\def\bfv{{\mbox{\boldmath $v$}}}
\def\bfw{{\mbox{\boldmath $w$}}}
\def\bfx{{\mbox{\boldmath $x$}}}
\def\bfD{{\mbox{\boldmath $D$}}}
\def\bfI{{\mbox{\boldmath $I$}}}
\def\bfK{{\mbox{\boldmath $K$}}}
\def\bfS{{\mbox{\boldmath $S$}}}
\def\bfepsilon{{\mbox{\boldmath $\epsilon$}}}
\def\bfomega{{\mbox{\boldmath $\omega$}}}
\newtheorem{theorem}{Theorem}
\begin{document}

\title{\LARGE \bf Merging Position and Orientation Motion Primitives}

\author{Matteo Saveriano$^{1}$, Felix Franzel$^{2}$, and Dongheui Lee$^{1,2}$%
\thanks{$^{1}$Institute of Robotics and Mechatronics, German Aerospace Center (DLR), We{\ss}ling, Germany {\tt matteo.saveriano@dlr.de}.}%
\thanks{$^{2}$Human-Centered Assistive Robotics, Technical University of Munich, Munich, Germany {\tt felix.franzel@tum.de, dhlee@tum.de}.}%
\thanks{This work has been supported by Helmholtz Association.}
}

\maketitle


\begin{abstract}
In this paper, we focus on generating complex robotic trajectories by merging sequential motion primitives. A robotic trajectory is a time series of positions and orientations ending at a desired target. Hence, we first discuss the generation of converging pose trajectories via dynamical systems, providing a rigorous stability analysis. Then, we present approaches to merge motion primitives which represent both the position and the orientation part of the motion. 
Developed approaches preserve the shape of each learned movement and allow for continuous transitions among succeeding motion primitives. Presented methodologies are theoretically described and experimentally evaluated, showing that it is possible to generate a smooth pose trajectory out of multiple motion primitives.     
\end{abstract}

\IEEEpeerreviewmaketitle

\section{Introduction}
Robots operating in everyday environments will execute a multitude of tasks ranging from simple motions to complex activities consisting of several actions performed on different objects. Hand programming of all these tasks is not feasible. Hence, researchers have investigated how to acquire novel tasks in an intuitive manner \cite{Schaal_99, CalinonLee19}. A possible solution is to demonstrate the task to execute, for example by physically guiding the robot towards the task completion  \cite{Lee_11, Saveriano_15}. Collected data are then used for motion planning.

Motion planning with dynamical systems has gained attention in the robot learning community and researchers have developed several approaches to represent demonstrations as dynamical systems \cite{DMP, TP-DMP, Blocher17,  Clf, Neumann2015, Perrin16, Saveriano18, Kronander15, Gribovskaya09, Pastor11, Ude14, Zeestraten17}. Dynamical systems are used to plan in joint or Cartesian space, and, in Cartesian space, to encode both position and orientation \cite{Gribovskaya09, Pastor11, Ude14, Zeestraten17}. Moreover, robots driven by stable systems are able to reproduce complex paths \cite{Blocher17, Clf, Neumann2015, Perrin16}, to incrementally update a predefined skill \cite{Saveriano18, Kronander15}, and to avoid possible collisions \cite{DS_avoidance, Saveriano13, Saveriano14, Saveriano17, Hoffmann09}. 

Complex robotic tasks, consisting of several actions, can be obtained by sequencing multiple motion primitives \cite{Kulic12, Muehlig12, Manschitz_15, Caccavale17, Caccavale18}. As in \cite{Manschitz_15, Caccavale17, Caccavale18}, this work represents the motion primitives as Dynamic Movement Primitives (DMP) \cite{DMP}, but other choices are possible \cite{Kulic12, Muehlig12}. 
Given a set of DMPs, the problem arises of how the DMPs can be merged to generate a unique and smooth trajectory without stopping at the end of each motion primitive. Pastor et al. \cite{Pastor_09} address this problem by activating the succeeding motion primitive when the velocity of the current primitive is smaller than a threshold. The succeeding primitive is initialized with the state reached by the previous one at the switching point. This avoids jumps in the velocity but it may cause jumps in the acceleration. To avoid jumps in acceleration, \cite{Nemec_09} augments the DMP with a low-pass filter. {Nevertheless, when positions and velocities of the consecutive primitives at the  switching  point are significantly  different, the trajectory has to be filtered a lot introducing a delay with consequent large deviations from the demonstration. The approach in \cite{Kober_10} has been proposed to learn hitting motions in table tennis, but it can be used to merge motion primitives. In \cite{Kober_10}, the DMP is augmented with a  moving  target  and  final velocity. Hence, each DMP reaches a certain position with a given velocity (different from zero) which are used to initialize the succeeding DMP. 
Instead of switching the DMPs, the approach in \cite{Kulvicius_11} creates a unique DMP by overlapping sequential movements. The unique DMP preserves the shape of each overlapped motion.} 

\begin{figure}[t]
	\centering
	\includegraphics[width=0.9\columnwidth]{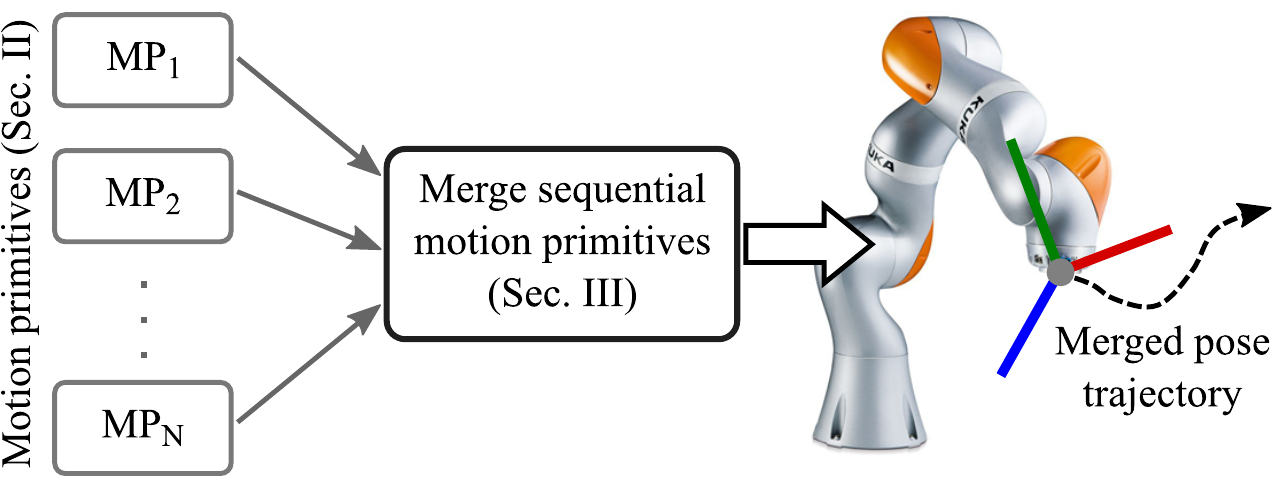}
	 \vspace{-0.2cm}
    \caption{Motion primitives are merged to generate a smooth robot trajectory.}
    \label{fig:overview}
     \vspace{-0.5cm}
\end{figure}

Aforementioned approaches are effective when the dynamical system is used to represent Cartesian positions or joint angles. However, they do not consider the orientation part of the motion. DMP formulations capable of encoding Cartesian orientation have been proposed in \cite{Pastor11,Ude14}, but without considering the problem of merging multiple movements. 
In this work, we first describe how DMP and unit quaternions are used to encode orientation trajectories and present a rigorous stability analysis that is missing in the related literature {\cite{Pastor11, Ude14}}. We then extend the approaches in \cite{Pastor_09}, \cite{Kober_10}, and \cite{Kulvicius_11} to merge sequential DMP representing both position and orientation (see Fig. \ref{fig:overview}). We use unit quaternions to represent the orientation and rely on quaternion algebra to define all the mathematical operations needed to merge the learned motions. Finally, we compare the presented approaches on simulated and real data in order to underline advantages and drawbacks of each approach.

\section{Cartesian Pose Motion Primitives}\label{sec:pose_dmp}
In this section, we describe how Cartesian poses are represented in the dynamic movement primitives (DMP) framework \cite{DMP} and provide a stability analysis. 
\subsection{Position DMP}\label{subsec:position_dmp}
 Following the representation introduced by Park et al.~\cite{Park08}, Cartesian positions are generated via the second-order dynamical system (time dependency is omitted for simplicity) 
\small
\begin{subequations}
\begin{align}
\tau \dot{\bfp} &= \bfv, \label{eq:dmp_lin_vel}\\
\tau \dot{\bfv} &= \bfK^p\left[(\bfp_d - \bfp) - \bfd^{p}_0(h) + \bff^p(h)\right] -\bfD^p\bfv,
\label{eq:dmp_lin_acc}
\end{align}
\end{subequations}
\normalsize
where $\bfp \in \mathbb{R}^3$ is the position, $\dot{\bfp} = \bfv \in \mathbb{R}^3$ is the linear velocity, and $\dot{\bfv} \in \mathbb{R}^3$ is the linear acceleration. The time scaling factor $\tau$ can be adapted to change the duration of the movement without changing the path. The positive definite matrices $\bfK^p$, $\bfD^p \in \mathbb{R}^{3\times 3}$ are  linear stiffness and damping gains respectively. The scalar $h$ is an exponentially decaying clock signal, obtained by integrating the so-called canonical system $\tau\dot{h} = -\gamma h$, %
with $\gamma > 0$. The clock signal is $h=1$ at the beginning of the motion and it exponentially converges to zero. The term $\bfd^{p}_0(h) = (\bfp_d - \bfp_{0})h$ in \eqref{eq:dmp_lin_acc} prevents a jump at the beginning of the motion and it vanishes for $h\rightarrow 0$. The forcing term $\bff^{p}(h)$ in \eqref{eq:dmp_lin_acc} is defined as
\small
\begin{equation}
\bff^{p}(h) = \frac{\sum_{i=1}^{N}\bfw_{i}\psi_{i}(h)}{\sum_{i=1}^{N}\psi_{i}(h)}h,\quad \psi_{i}(h) = e^{-a(h-c_{i})^2}. \label{eq:force_term}\\
\end{equation}
\normalsize
Given the amplitude $a$ and the centers $c_{i}$, the parameters $\bfw_{i}$ are learned from demonstration using weighted least square \cite{DMP}. From \eqref{eq:force_term}, it is clear that $\bff^{p}(h)$ vanishes for $h\rightarrow 0$.

\subsection{Orientation DMP}\label{subsec:orientation_dmp}
Dynamic movement primitives, commonly used to represent Cartesian or joint position, have been extended to represent Cartesian orientation \cite{Pastor11, Ude14}. The approaches in \cite{Pastor11} and \cite{Ude14} use a different definition of the orientation error, as detailed later in this section. In this work, orientation is represented by a unit quaternion $\bfq = [\eta, \bfepsilon\tr]\tr \in \mathcal{S}^3$, where $\mathcal{S}^3$ is the unit sphere in the $3$D space. Unit quaternions have less parameters compared to rotation matrices ($4$ instead of $9$). Compared to other representations, like Euler angles,  they are uniquely defined and have no singularities if rotations are restricted to one hemisphere of $\mathcal{S}^3$ \cite{Siciliano_book}. A DMP for the orientation is defined as
\small
\begin{subequations}
\begin{align}
\tau \dot{\bfq} &= \frac{1}{2}\tilde{\bfomega} \ast \bfq,  \label{eq:dmp_ang_vel}\\
\tau \dot{\bfomega} &= \bfK^q\left[\bfe_o(\bfq_d , {\bfq}) - \bfd^{q}_0(h) + \bff^q(h)\right] -\bfD^q\bfomega, 
\label{eq:dmp_ang_acc}
\end{align}
\end{subequations}
\normalsize
where\footnote{{The DMP formulation in \eqref{eq:dmp_ang_vel}--\eqref{eq:dmp_ang_acc} is also adopted in \cite{Pastor11} but using $\bfe_o(\bfq , {\bfq}_d) = -\bfe_o(\bfq_d , {\bfq})$ as orientation error.}} $\bfq \in \mathcal{S}^3$ is the unit quaternion, $\bfomega \in \mathbb{R}^3$ and  $\dot{\bfomega} \in \mathbb{R}^3$ are the angular velocity and acceleration respectively, and $\tau$ is a temporal scaling factor. The symbol $\ast$ indicates the product of two quaternions defined in \eqref{eq:quaternion_product}, $\bfe_o(\cdot,\cdot) \in \mathbb{R}^3$ is the error between two quaternions, and the quantity $\tilde{\bfomega}$ is the angular velocity quaternion, i.e. $\tilde{\bfomega} = [0, \bfomega\tr]\tr$. In other words, $\tilde{\bfomega}$ is a quaternion with zero as scalar part and the angular velocity as vector part. The positive definite matrices $\bfK^q$, $\bfD^q \in \mathbb{R}^{3\times 3}$ are angular stiffness and damping gains respectively. The clock signal is the same used for the position ($\tau\dot{h} = -\gamma h$). The term $\bfd^{q}_0(h) = \bfe_o(\bfq_d \ast {\bfq}_0)h$ in \eqref{eq:dmp_ang_acc} prevents a jump at the beginning of the motion and it vanishes for $h\rightarrow 0$. The nonlinear forcing term $\bff^{q}(h)$ in \eqref{eq:dmp_ang_acc} is defined as in \eqref{eq:force_term}, it is learned from demonstration, and it vanishes for $h\rightarrow 0$. The quaternion rate \eqref{eq:dmp_ang_vel} is integrated by means of \eqref{eq:quaterion_rate_integration}. 

There are two key differences between position DMP in \eqref{eq:dmp_lin_vel}--\eqref{eq:dmp_lin_acc} and orientation DMP in \eqref{eq:dmp_ang_vel}--\eqref{eq:dmp_ang_acc}. First, the relationship between the time derivative of the quaternion $\dot{\bfq}$ and the angular velocity in \eqref{eq:dmp_ang_vel} is nonlinear, while the derivative of the position equals the linear velocity in \eqref{eq:dmp_lin_vel}. Second, the error between two quaternions $\bfe_o(\cdot,\cdot)$ in \eqref{eq:dmp_ang_acc} is a nonlinear function and it has multiple definitions, while the error between two positions in \eqref{eq:dmp_lin_acc} is simply their difference. In robotics and control, the orientation error between quaternions $\bfq_1$ and $\bfq_2$ is typically defined as $\bfe_o = \text{vec}(\bfq_1 \ast \overline{\bfq}_2)$ \cite{Siciliano_book, Yuan88}, where the function $\text{vec}(\bfq)$ returns the vector part of $\bfq$. This definition of the orientation error is used in \cite{Pastor11} for orientation DMP, while Ude et al. \cite{Ude14} propose to use the quantity $2\log(\bfq_1 \ast \overline{\bfq}_2)$ as orientation error, where the \textit{logarithmic map} $\log(\cdot)$ is defined as in \eqref{eq:logarithmic_map}. 

\subsection{Stability analysis}\label{subsec:stability}
The stability of the position DMP in \eqref{eq:dmp_lin_vel}--\eqref{eq:dmp_lin_acc} is trivially proved. Indeed, $\bfd^{p}_0(h)$ and $\bff^{p}(h)$ vanish for the time $t \rightarrow +\infty$ and \eqref{eq:dmp_lin_vel}--\eqref{eq:dmp_lin_acc} become a linear system. Hence, the positive definiteness of $\bfK^p$ and $\bfD^p$ is sufficient to conclude that the dynamical system \eqref{eq:dmp_lin_vel}--\eqref{eq:dmp_lin_acc} asymptotically converges to $\bfp_d$ with zero velocity.
For orientation DMPs, instead, it is interesting to prove the following stability theorem:
\begin{theorem}
\label{th:ori_dmp_stab}
The orientation DMP in \eqref{eq:dmp_ang_vel}--\eqref{eq:dmp_ang_acc}, with clock signal $h$ defined such that $h \rightarrow 0 $ for $t \rightarrow +\infty$ and orientation error defined as $\bfe_o(\bfq_1, \bfq_2) = \text{\em{vec}}(\bfq_1 \ast \overline{\bfq}_2)$, globally asymptotically converges to $\hat{\bfq} = \bfq_d$ with $\hat{\bfomega} = \bfzero$. 
\end{theorem}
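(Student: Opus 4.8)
The plan is to exploit, as in the position case, that the non-autonomous terms vanish: since $h \to 0$ for $t \to +\infty$, both $\bfd^{q}_0(h)$ and $\bff^q(h)$ tend to zero, so \eqref{eq:dmp_ang_vel}--\eqref{eq:dmp_ang_acc} reduce asymptotically to the autonomous system $\tau\dot{\bfq} = \frac{1}{2}\tilde{\bfomega}\ast\bfq$, $\tau\dot{\bfomega} = \bfK^q\bfe_o(\bfq_d,\bfq) - \bfD^q\bfomega$. I would treat the $h$-dependent terms as a vanishing perturbation and establish the result for this limiting system, whose stability then governs the asymptotic behaviour of the full one.

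First I would introduce the error quaternion $\bfq_e = \bfq_d\ast\overline{\bfq} = [\eta_e, \bfepsilon_e\tr]\tr$, so that $\bfe_o(\bfq_d,\bfq) = \bfepsilon_e$ and the target $\bfq = \bfq_d$ corresponds to $\bfq_e = [1,\bfzero\tr]\tr$. Differentiating and using \eqref{eq:dmp_ang_vel} together with $\overline{\tilde{\bfomega}} = -\tilde{\bfomega}$ gives the error kinematics $\tau\dot{\bfq}_e = -\tfrac{1}{2}\bfq_e\ast\tilde{\bfomega}$, whose scalar part reads $\dot{\eta}_e = \tfrac{1}{2\tau}\bfepsilon_e\tr\bfomega$. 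This identity is the hinge of the whole argument.

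Second I would propose the Lyapunov candidate
\[
V = \tfrac{1}{2}\bfomega\tr(\bfK^q)^{-1}\bfomega + 2(1-\eta_e),
\]
which is positive definite on $\mathcal{S}^3\times\mathbb{R}^3$ relative to the equilibrium $(\bfq_d,\bfzero)$, with compact sublevel sets (recall $\eta_e\le 1$, with equality only at $\bfq=\bfq_d$). Substituting the autonomous dynamics and the scalar kinematics, the cross terms $\tfrac{1}{\tau}\bfomega\tr\bfepsilon_e$ cancel and one is left with $\dot{V} = -\tfrac{1}{\tau}\bfomega\tr(\bfK^q)^{-1}\bfD^q\bfomega \le 0$, negative semidefinite whenever the symmetric part of $(\bfK^q)^{-1}\bfD^q$ is positive definite (automatic, e.g., for diagonal or commuting gains). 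Since $\dot{V}$ vanishes only for $\bfomega=\bfzero$, I would invoke LaSalle's invariance principle: on the largest invariant set contained in $\{\bfomega=\bfzero\}$ one has $\dot{\bfomega}\equiv\bfzero$, hence $\bfK^q\bfepsilon_e=\bfzero$ and thus $\bfepsilon_e=\bfzero$; compactness of the sublevel sets makes LaSalle applicable.

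The main obstacle is topological rather than computational. Because $\mathcal{S}^3$ double-covers $SO(3)$, the condition $\bfepsilon_e=\bfzero$ is met both at $\bfq=\bfq_d$ ($\eta_e=1$) and at the antipodal $\bfq=-\bfq_d$ ($\eta_e=-1$), so the invariant set consists of \emph{two} equilibria and strict global asymptotic stability on the sphere is impossible. I would resolve this by noting that $\eta_e=-1$ is the unique maximizer of the potential $2(1-\eta_e)$, hence an unstable equilibrium whose region of attraction has measure zero; every trajectory not starting on this set converges to $(\bfq_d,\bfzero)$. This is the sense in which the convergence is \textbf{global}, and making this interpretation precise --- together with transferring the conclusion from the limiting autonomous system back to the original non-autonomous one via the vanishing-perturbation argument --- is the delicate part of the write-up.
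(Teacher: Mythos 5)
Your proposal is essentially the paper's own proof: the paper also reduces to the asymptotically autonomous limit system (citing Markus), uses the candidate $V = (\eta_d-\eta)^2 + \Vert\bfepsilon_d-\bfepsilon\Vert^2 + \tfrac{1}{2}\bfomega\tr(\bfK^q)^{-1}\bfomega$, obtains $\dot V = -\bfomega\tr(\bfK^q)^{-1}\bfD^q\bfomega$ under the same commuting/diagonal-gain assumption, and closes with LaSalle. Note that your function is literally the same one, since $\Vert\bfq_d-\bfq\Vert^2 = 2 - 2\,\text{scal}(\bfq_d\ast\overline{\bfq}) = 2(1-\eta_e)$; your derivation via the error kinematics $\dot\eta_e = \tfrac{1}{2\tau}\bfepsilon_e\tr\bfomega$ is a cleaner route to the same cancellation than the paper's componentwise use of the quaternion propagation and $\bfS(\bfepsilon)\bfepsilon=\bfzero$.

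Where you genuinely add something is the antipodal equilibrium. The paper's LaSalle step only yields convergence to the largest invariant set in $\{\bfomega=\bfzero\}$, which is $\{(\bfq_d,\bfzero),(-\bfq_d,\bfzero)\}$ since $\text{vec}(\bfq_d\ast\overline{\bfq})=\bfzero$ at both $\bfq=\bfq_d$ and $\bfq=-\bfq_d$; the paper does not address this and simply asserts stability, so its stated ``global'' asymptotic convergence to $\bfq_d$ is not actually established by its own argument. Your resolution --- observing that $(-\bfq_d,\bfzero)$ maximizes the potential term and is therefore unstable with a measure-zero basin, so convergence is only almost-global on $\mathcal{S}^3\times\mathbb{R}^3$ --- is the standard and correct repair, and is the honest reading of what the theorem can claim. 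The remaining work you flag (making the vanishing-perturbation transfer rigorous) is also glossed over in the paper, which delegates it entirely to the asymptotically-autonomous-systems citation; if you carry out both refinements your write-up would be strictly more complete than the published proof.
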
  
\begin{proof}
Recall that the non-linearities in \eqref{eq:dmp_ang_vel}--\eqref{eq:dmp_ang_acc} are smooth functions and that the time dependancy introduced by $h$ vanishes for $t\rightarrow +\infty$. Hence, \eqref{eq:dmp_ang_vel}--\eqref{eq:dmp_ang_acc} are an \textit{asymptotically autonomous differential system} and the stability can be proved analyzing its asymptotic behavior \cite{Markus56}. In other words, we have to prove the stability of
\small
\begin{equation}
\dot{\bfq} = \frac{1}{2}\tilde{\bfomega} \ast \bfq, \quad \dot{\bfomega} = \bfK^q\text{vec}(\bfq_d \ast \overline{\bfq}) -\bfD^q\bfomega, \label{eq:dmp_as}
\end{equation}
\normalsize
where we set $\tau = 1$ without loss of generality. The stability of the non-linear system \eqref{eq:dmp_as} is proved with the Lyapunov method \cite{Slotine91}, using the Lyapunov candidate
\small
\begin{equation}
V(\bfx) = \left(\eta_d - \eta\right)^2 + \Vert\bfepsilon_d-\bfepsilon\Vert^2 + \frac{1}{2}\bfomega\tr(\bfK^q)^{-1}\bfomega,\label{eq:lyapunov_fun_rel_ori}
\end{equation}
\normalsize
where the state $\bfx = [\bfq\tr,\,\bfomega\tr]\tr$, $\bfq= [\eta,\,\bfepsilon\tr]\tr$, and $\bfq_d = [\eta_d,\,\bfepsilon_{d}\tr]\tr$. The candidate Lyapunov function in \eqref{eq:lyapunov_fun_rel_ori} is positive definite and it vanishes only at the equilibrium $\hat{\bfx} = [{\bfq_d}\tr,\,\bfzero\tr]\tr$. The time derivative of $V(\bfx)$ is
\small
\begin{equation*}
\begin{split}
\dot{V} &= -2\left(\eta_d - \eta\right)\dot{\eta} -2\left(\bfepsilon_d-\bfepsilon\right)\dot{\bfepsilon}+\bfomega\tr(\bfK^q)^{-1}\dot{\bfomega}\\
&= \left(\eta_d - \eta\right)\bfepsilon\tr\bfomega + \left(\bfepsilon_d-\bfepsilon\right)\tr\left(\eta\bfI - \bfS(\bfepsilon)\right)\bfomega+\bfomega\tr(\bfK^q)^{-1}\dot{\bfomega}
\end{split}
\end{equation*}
\normalsize
where we used $\dot{q}$ in \eqref{eq:dmp_as} and the quaternion propagation \eqref{eq:quaternion_propagation}. Considering the definition of $\dot{\bfomega}$ in \eqref{eq:dmp_as}, we obtain that
\small
\begin{equation*}
\begin{split}
\dot{V}&= \left(\eta_d - \eta\right)\bfepsilon\tr\bfomega + \left(\bfepsilon_d-\bfepsilon\right)\left(\eta\bfI - \bfS(\bfepsilon)\right)\bfomega\\
&\quad+\bfomega\tr\cancel{(\bfK^q)^{-1}\bfK^q}\text{vec}(\bfq_d \ast \overline{\bfq}) -\bfomega\tr(\bfK^q)^{-1}\bfD^q\bfomega
\end{split}
\end{equation*}
\normalsize
Considering the quaternion product in \eqref{eq:quaternion_product} and that $\bfS(\bfa)\bfa = \bfzero$ if $\bfS(\cdot)$ is a \textit{skew-symmetric} matrix, we obtain that
\small
\begin{equation*}
\dot{V} = -\bfomega\tr(\bfK^q)^{-1}\bfD^q\bfomega + \cancel{\bfomega\tr\left(\text{vec}(\bfq_d \ast \overline{\bfq}) - \text{vec}(\bfq_d \ast \overline{\bfq})\right)}
\end{equation*}
\normalsize
The matrix $(\bfK^q)^{-1}\bfD^q$, where $(\bfK^q)^{-1}$ and $\bfD^q$ are positive definite matrices, is positive definite iff $(\bfK^q)^{-1}\bfD^q = \bfD^q(\bfK^q)^{-1}$. This property can be guaranteed, for example, by assuming that $\bfK^q$ and $\bfD^q$ are diagonal matrices. If $(\bfK^q)^{-1}\bfD^q$ is a positive definite matrix, then $\dot{V} \leq 0$ and $\dot{V}$ vanishes iff $\bfomega = \bfzero$. The LaSalle's invariance theorem \cite{Slotine91} allows to conclude the stability of \eqref{eq:dmp_as}. 
\end{proof}

In \cite{Ude14}, authors use $\bfe_o = 2\text{log}(\bfg \ast \overline{\bfq})$ . With this choice, the stability can be shown using $V(\bfx) = \left(\eta_d - \eta\right)^2 + \Vert\bfepsilon_d-\bfepsilon\Vert^2 + 0.5\bfomega\tr\bfomega$ as Lyapunov function and selecting $\bfK^q = \frac{\Vert \text{vec}(\bfq_d \ast \overline{\bfq})\Vert}{2\arccos(\text{scal}(\bfq_d \ast \overline{\bfq}))}\bfI$ as stiffness gain. This non-linear stiffness gain has a singularity when $\bfq_d$ and $\bfq$ are aligned. In this work, we use the vector-based quaternion error $\bfe_o = \text{vec}(\cdot,\cdot)$ to avoid  non-linearity and singularity in the gain matrices.

\section{Merging Pose Motion Primitives}\label{sec:merge_dmp}
Motion primitives can be combined to execute complex robotics tasks \cite{Caccavale17, Manschitz_15, Caccavale18}. In this section, we present three different approaches to merge pose DMPs. Each of them follows a different idea on how to smoothly transition between successive DMPs. We assume that $L$ sequential pose DMPs are given. Each DMP converges to a certain position $\bfp_d^l$ and orientation $\bfq_d^l$ for $l=1,\ldots,L$. In all the presented approaches the clock signal vanishes for $t\rightarrow+\infty$. As discussed in Sec. \ref{subsec:stability}, this is sufficient to guarantee the convergence to the last goal $\bfp_d^L$, $\bfq_d^L$. Note that blue text is used in the equations to highlight the differences between the approaches in this section and the pose DMP in Sec. \ref{sec:pose_dmp}. 


\subsection{First Approach}\label{subsec:first_approach}
The method described in~\cite{Pastor_09} originates from the assumption that any DMP reaches the end position with zero velocity and zero acceleration. This means that once a motion is fully executed it will come to a full stop 
and that, close to the goal position, the robot moves with a decreasing velocity. In order to combine $L$ motion primitives, one can stop the current motion when the norm of the velocity is smaller than a certain threshold and start the next primitive. The next primitive is initialized with the state of the previous one ($\bfp_{ne}=\bfp_{pr}$, $\bfv_{ne}=\bfv_{pr}$) to avoid discontinuities. This applies to orientation by initializing the state of the next DMP as $\bfq_{ne}=\bfq_{pr}$, $\bfomega_{ne}=\bfomega_{pr}$. Note that the approach applies to any second-order dynamical system including DMPs. 


\subsection{Second Approach}\label{subsec:second_approach}
\begin{figure}[t]
	\centering
	{%
    \subfigure[Position]{\centering {\includegraphics[width=0.49\columnwidth]{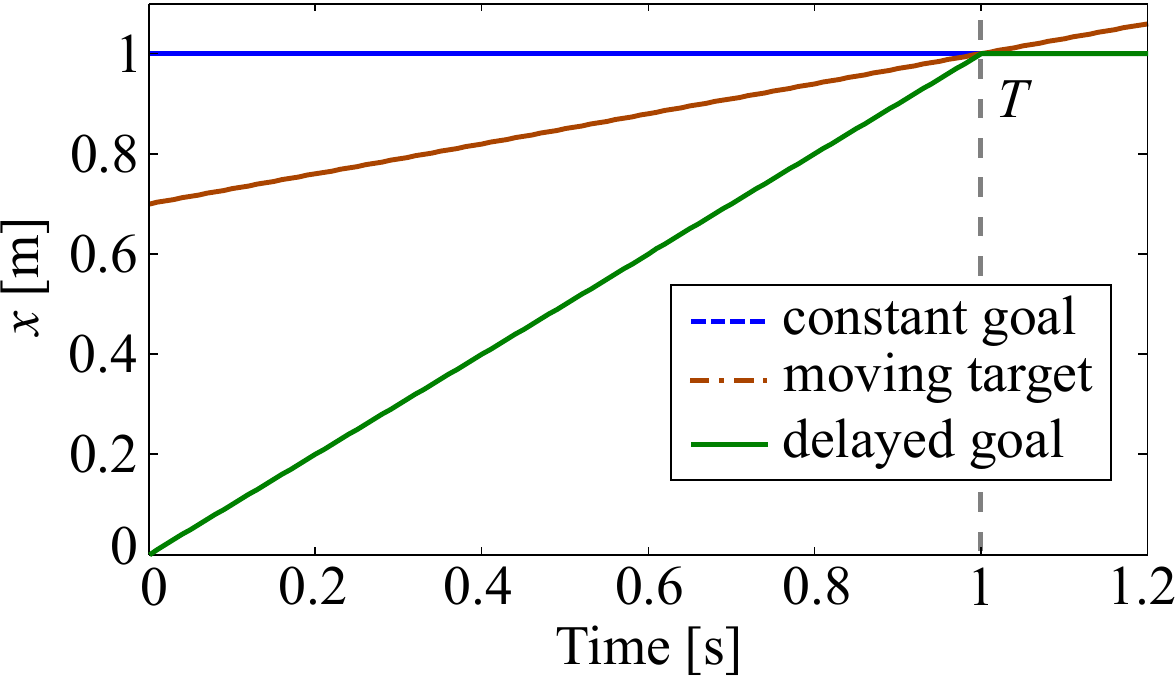}}}
    \subfigure[Quaternion]{\centering {\includegraphics[width=0.49\columnwidth]{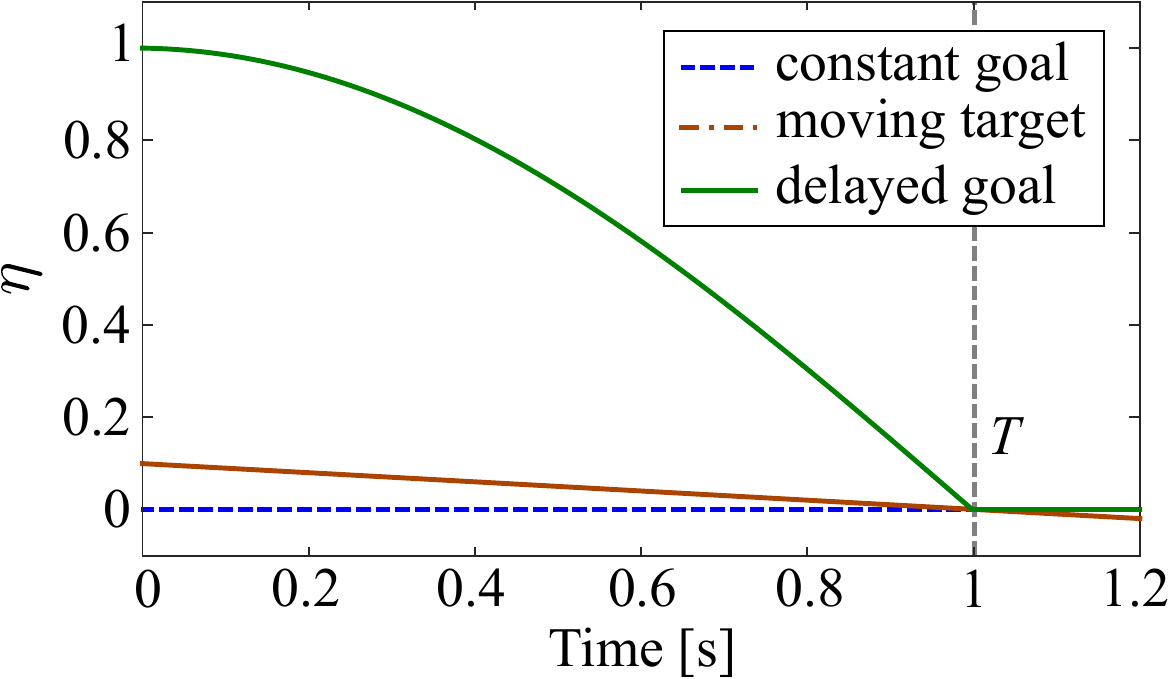}}}
    }%
    \vspace{-0.5cm}
    	\caption{The constant goal, moving target, and delayed goal obtained obtained with $\bfp(0) = [0,0,0]\tr\,$m, $\bfp_d = [1,0,0]\tr\,$m, $\bfq(0) = [1,0,0,0]\tr$, $\bfq_d = [0,1,0,0]\tr$, $\bfv_d = [0.3,0.3,0.3]\tr\,$m/s, $\bfv_d = [0.3,0.3,0.3]\tr\,$m/s, $\bfomega_d = [0.2,0.2,0.2]\tr\,$m/s, $\delta t = 0.01\,$s, and $T=1\,$s. Only $x$ for the position and $\eta$ for the quaternion are shown for a better visualization.}
    	\label{fig:clock_and_goal}
    	 \vspace{-0.5cm}
\end{figure} 
The approach in \cite{Kober_10} allows to cross the goal position of a DMP with a non-zero velocity. This is achieved by allowing the DMP to track a position target that moves at a given velocity. Hence, the linear acceleration in \eqref{eq:dmp_lin_acc} becomes
\small
\begin{equation*}
\tau \dot{\bfv} = \bfK^p\left[(\textcolor{blue}{\bfp_m^l} - \bfp)\textcolor{blue}{(1-h)} + \bff^p(h)\right]+\bfD^p(\textcolor{blue}{\bfv_d^l} - \bfv)\textcolor{blue}{(1-h)},
\end{equation*}
\normalsize
where $\bfv_d^l$ is the chosen final linear velocity of the $l$-th DMP and the moving target $\bfp_m^l$ is defined as
\small
$\bfp_{m}^l(t) = \bfp_{m}^l(0) -{\bfv_d^l}\frac{ \tau \ln(h) }{ \gamma }, \quad  \bfp_{m}^l(0) = \bfp_d^l-T^l\bfv_d^l, $
\normalsize
where~$\bfp_d^l$ is the goal position and~$T^l$ is the time duration of the $l$-th DMP. The moving target~$\bfp_{m}^l$ is designed to reach the goal position at~$\bfp_{m}^l(T^l) = \bfp_d^l$ (see Fig. \ref{fig:clock_and_goal}(a)). This is because the term $- \tau\ln(h) / \gamma$  represents the time if $h$ is defined by the canonical system $\tau\dot{h} = -\gamma h$. The initial position of the moving target $ \bfp_{m}^l(0)$ is computed by moving the goal position $\bfp_d^l$ for $T^l$ at constant velocity $-\bfv_d^l$. High accelerations at the beginning of the movement are avoided by the prefactor~$(1-h)$, that replaces the term $\bfd^{p}_0(h)$ in \eqref{eq:dmp_lin_acc}. 

The presented idea is here extended to unit quaternions. The angular acceleration in \eqref{eq:dmp_ang_acc} is rewritten as
\small
\begin{equation*}
\tau \dot{\bfomega} = \bfK^q\left[\bfe_o(\textcolor{blue}{\bfq_m^l} ,\bfq)\textcolor{blue}{(1-h)} + \bff^q(h)\right]+\bfD^q(\textcolor{blue}{\bfomega_d^l} - \bfomega)\textcolor{blue}{(1-h)},
\end{equation*}
\normalsize
where $\bfomega_d^l$ is the chosen final angular velocity of the $l$-th DMP and $\bfe_o(\bfq_m^l,\bfq) = \text{vec}(\bfq_m^l \ast \overline{\bfq})$ as detailed in Sec. \ref{subsec:orientation_dmp}. High angular accelerations at the beginning of the motions are prevented by the prefactor~$(1-h)$ that replaces the term $\bfd^{q}_0(h)$ used in \eqref{eq:dmp_ang_acc}. The moving target $\bfq_m^l$ is defined as
\small
\begin{equation}
\begin{aligned}
 &\bfq_m^l(t) = \exp\left(-\frac{\tau\ln(h)}{2\gamma} \bfomega_d^l \right) \ast \bfq_m^l(0), \\
 &\bfq_m^l(0) = \exp\left(-\frac{T^l}{2} \bfomega_d^l \right) \ast \bfq_d^l,
 \end{aligned}
\end{equation}
\normalsize
where~$\bfq_d^l$ is the goal orientation and~$T^l$ is the time duration of the $l$-th DMP. The initial orientation of the moving target $ \bfq_{m}^l(0)$ is computed by moving the goal orientation $\bfq_d^l$ for $T^l$ at constant velocity $-\bfomega_d^l$. Considering the definitions of the exponential map $\exp(\cdot)$ and the quaternion product $\ast$ in \eqref{eq:exponential_map} and \eqref{eq:quaternion_product} respectively, it is straightforward to verify that $\bfq_{m}^l$ reaches the goal quaternion at~$\bfq_{m}^l(T^l) = \bfq_d^l$ (Fig. \ref{fig:clock_and_goal}(b)).  



Having now the ability to cross each goal after $T^l$ with a non-zero velocity, we can combine multiple motion primitives. Given two consecutive DMPs $l$ and $l+1$, we run $l$ for $T^l$ seconds and then switch to $l+1$. To avoid discontinuities, we initialize the state of $l+1$ with the final state of $l$ \cite{Pastor_09}. 
\subsection{Third Approach}\label{subsec:thirdapproach}
The approach in \cite{Kulvicius_11} merges multiple DMPs into a single, more complex one. 
In \cite{Kulvicius_11}, the canonical system is 
\small
\begin{equation} 
\dot{h} = - \alpha_{ h }e^{\frac{ \alpha_{h} }{ \delta_{ t }}(\tau T-t)}
 / [ 1 + e^{\frac{ \alpha_{h} }{ \delta_{ t }}(\tau T-t)}]^{ 2 },
 \label{eq:sigmoidaldecay}
\end{equation}
\normalsize
where~$ \alpha_{ h }$ defines the steepness of the sigmoidal decay function $h$ centred at the time moment $T$. The value of~$h$ is~$h = 1$ for $t < T -\delta$, where $\delta$ depends on the steepness $\alpha_{ h }$, and then it decays to~$h=0$. The linear acceleration in \eqref{eq:dmp_lin_acc} becomes
\small
\begin{equation}
\tau \dot{\bfv} = \bfK^p(\textcolor{blue}{\bfp_m^l} - \bfp) + \bfK^p\bff^p(h) - \bfD^p\bfv,
\label{eq:dmp_lin_acc_3}
\end{equation}
\normalsize
while the linear velocity in \eqref{eq:dmp_lin_vel} is the same. {The moving target $\bfp_m^l$, called delayed goal function in \cite{Kulvicius_11}, is defined as
\small
\begin{equation} 
\tau \dot{\bfp}_m^l= 
\begin{cases}
\frac{ \delta{ t } }{ T^{l } }(\bfp_d^l-\bfp^l(0)), & \sum\limits_{ k=1 }^{ l-1 }{ T^{ k } \leq t \leq \sum\limits_{ k=1 }^{ l }{ T^{ k } }} \\
[0,\,0,\,0]\tr, &\text{otherwise}
\end{cases} ,
\label{eq:lineardelayedgoalfunction}
\end{equation}
\normalsize
where~$\bfp^l(0)$ and~$\bfp_d^l$ are the initial and goal position of the $l$-th DMP, $T^l$ is the duration of $l$-th DMP, ~$\delta{ t }$ is the sampling rate, $l=1,\ldots,L$, and $L$ is the number of movement primitives to merge. Note that  \eqref{eq:lineardelayedgoalfunction} generates a piecewise linear moving target $\bfp_m^l$ that reaches the goal $\bfp_d^l$ after $T^l\,$s (see Fig. \ref{fig:clock_and_goal}(a)). Being $\bfp_m^l = \bfp^l(0)$, the acceleration \eqref{eq:dmp_lin_acc_3} is smooth at the beginning of the motion. For this reason, the term $\bfd_0^p(s)$ used in~\eqref{eq:dmp_lin_acc} is not needed in~\eqref{eq:dmp_lin_acc_3}.}
The non-linear forcing term $\bff^p(h)$ {used in~\eqref{eq:dmp_lin_acc_3}} slightly differs from the one in \eqref{eq:force_term} 
\small
\begin{equation}
\bff^{p}(h) = \frac{\sum_{i=1}^{N}\bfw_{i}\psi_{i}(t)}{\sum_{i=1}^{N}\psi_{i}(t)}h,\quad \psi_{i}(t) = e^{-(\textcolor{blue}{\frac{ t }{ \tau T }}-c_{i})^2\textcolor{blue}{/2\sigma^{ 2 }_{ i }}}, \label{eq:force_term_2}
\end{equation}
\normalsize
where~$\sigma_{ i }$ is the width of the~$i$-th kernel,~$c_{ i }$ are their centres, and $h$ is given by \eqref{eq:sigmoidaldecay}. The kernels $\psi_i$ in \eqref{eq:force_term_2} differ from those in \eqref{eq:force_term} since the term $t/\tau T$ replaces the canonical system $h$. Note that, for $\tau=1$, $0\leq t/\tau T \leq 1$ and the kernels are equally spaced between $0$ and $1$.  The kernel widths~$\sigma_{i}$ are constant and depend on the number of kernels. 
 
Given $L$ DMPs in the described form, one can obtain a single DMP by combining kernels and weights of the separately learned DMPs. In particular, the centers, originally equally spaced between 0 and 1, are replaced by 
\small
\begin{equation} 
\overline{c}^{ l }_{ i }=\begin{cases} \frac{ T^{ 1 }( i-1 )}{ T( N-1 )}, &l=1\\
\frac{ T^{ l }( i-1 )}{ T( N-1 ) } + \frac{ 1 }{ T} \sum\limits_{ k=1 }^{ l-1 }{ T^{ k } }, & \text{otherwise}
 \end{cases}, \label{eq:centres} 
\end{equation}
\normalsize
where $N$ is the number of kernels of each DMP, $i = 1, ..., N$, $l = 1, ..., L$, $T^{l}$ is the duration of the $l$-th DMP, and ~$T = \sum_{k=1}^{L}{ T^{k}}$ is the duration of the joint trajectory. 
The width of the kernels is scaled down by $T^l/T$, i.e. $\overline{\sigma}^{l}_{l}= \sigma^{l}_{i}T^{l}/T$.
The weights of each DMP $\bfw_i^l$ remain unchanged. 
{The $N$ kernels and $N$ weights of the $L$ DMPs are stacked together to form a single DMP with $NL$ kernels and $NL$ weights.} The combined kernels of succeeding DMPs now intersect at the transition points, resulting in smooth transitions. 

We extend the described approach to unit quaternions. The angular acceleration in \eqref{eq:dmp_ang_acc} is rewritten as 
\small
$\tau \dot{\bfomega} = \bfK^q\bfe_o(\textcolor{blue}{\bfq_m^l} ,\bfq) + \bfK^q\bff^q(h) - \bfD^q\bfomega.$
\normalsize
 The quaternion goal function $\bfq_m^l$ ranges from $\bfq^l(0)$ to $\bfq_d^l$ in $T^l\,$ seconds {(see Fig. \ref{fig:clock_and_goal}(b)).} 
Hence, $\bfq_m^l$ is a geodesic on $\mathcal{S}^3$ and it is defined as $\small\bfq_m^l(t+1) = \exp\left(\frac{\tau\bfomega_m^l}{2}\right)\ast\bfq_m^l(t)\normalsize$, where
\small
\begin{equation} 
\bfomega_m^l = \begin{cases}
\frac{2}{T^l}\log(\bfq_d^l \ast \bfq^l(0)) & \sum\limits_{ k=1 }^{ l-1 }{ T^{ k } \leq t \leq \sum\limits_{ k=1 }^{ l }{ T^{ k } }}\\  [0,\,0,\,0]\tr, & \text{ otherwise } \end{cases}. \label{eq:qdelayedgoalfunction}
\end{equation}
\normalsize
%
In \eqref{eq:qdelayedgoalfunction}, $\bfq_d^l$ is the goal and $\bfq^l(0)$ is the initial orientation of the $l$-th DMP, ${T^l}$ is the time duration of the $l$-th DMP, $l=1,\ldots,L$, and $2\log(\bfq_d^l \ast \bfq^l(0))$  is the angular velocity that rotates $\bfq^l(0)$ into $\bfq_d^l$ in a unitary time. The functions $\log(\cdot)$ and  $\exp(\cdot)$ are defined in \eqref{eq:logarithmic_map} and \eqref{eq:exponential_map} respectively. Note that the described approach to calculate $\bfq_m^l$ corresponds to interpolate $\bfq^l(0)$ and $\bfq_d^l$ with the SLERP algorithm \cite{slerp}. To reach the final orientation $\bfq^{L}_d$, the delayed goal function firstly reaches $\bfq^{1}_d$, then $\bfq^{2}_d$, and so on until $\bfq^{L}_d$ is reached.

\section{Experimental Results}\label{sec:experiments}
\begin{figure*}[t]
	\centering
	{%
    \subfigure[Quaternion]{\centering {\includegraphics[width=0.16\textwidth]{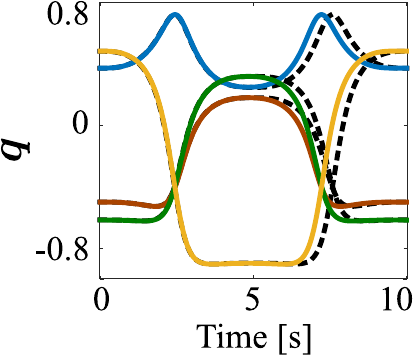}}}
    \subfigure[Angular velocity]{\centering {\includegraphics[width=0.16\textwidth]{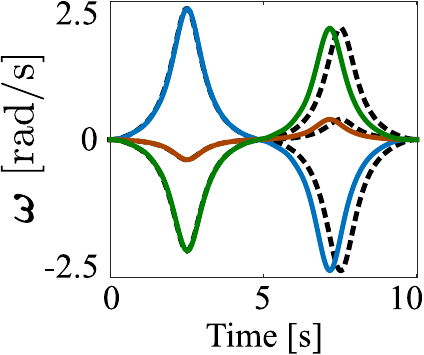}}}
    \subfigure[Angular velocity]{\centering {\includegraphics[width=0.16\textwidth]{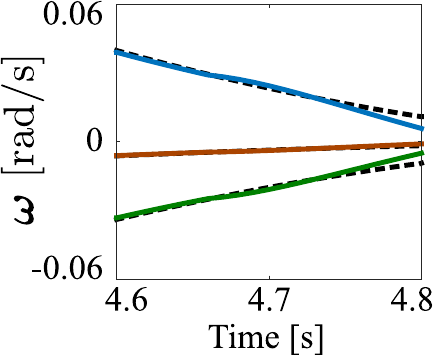}}}
    \subfigure[Goal switch]{\centering {\includegraphics[width=0.16\textwidth]{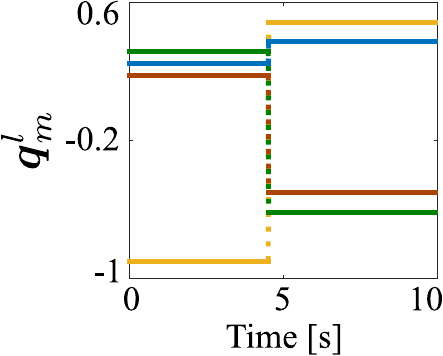}}}
    \subfigure[Orientation error]{\centering {\includegraphics[width=0.17\textwidth]{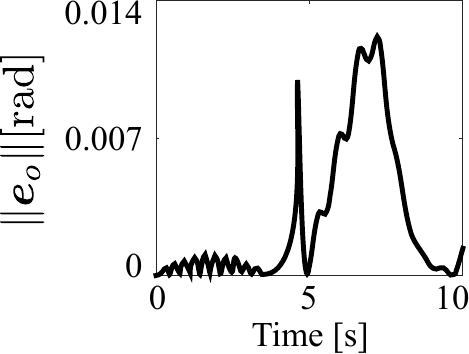}}}
    \vspace{-0.2cm}
    
    \subfigure[Quaternion]{\centering {\includegraphics[width=0.16\textwidth]{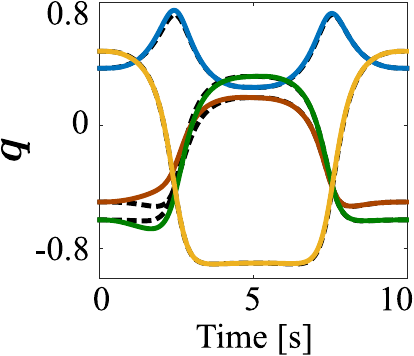}}}
    \subfigure[Angular velocity]{\centering {\includegraphics[width=0.16\textwidth]{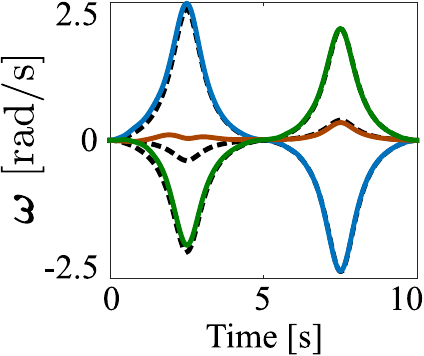}}}
    \subfigure[Angular velocity]{\centering {\includegraphics[width=0.16\textwidth]{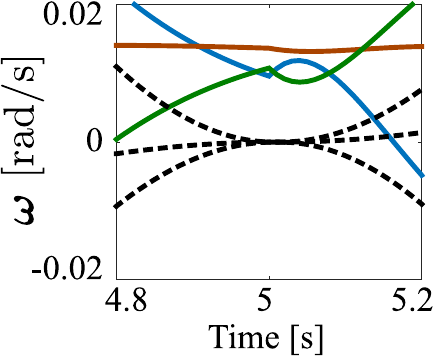}}}
    \subfigure[Moving target]{\centering {\includegraphics[width=0.16\textwidth]{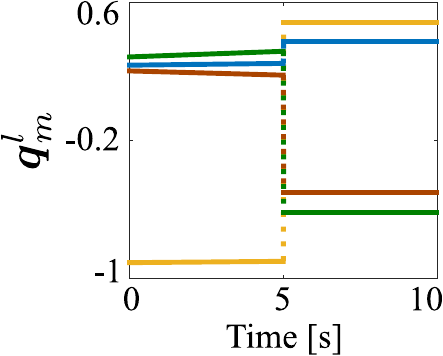}}}
    \subfigure[Orientation error]{\centering {\includegraphics[width=0.16\textwidth]{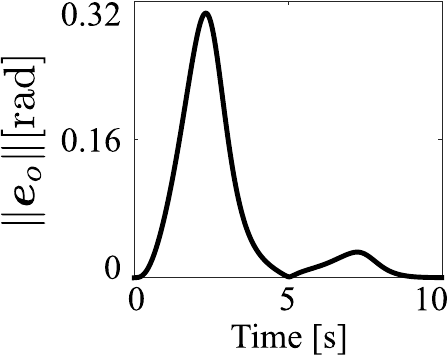}}}
    \vspace{-0.2cm}
    
    \subfigure[Quaternion]{\centering {\includegraphics[width=0.16\textwidth]{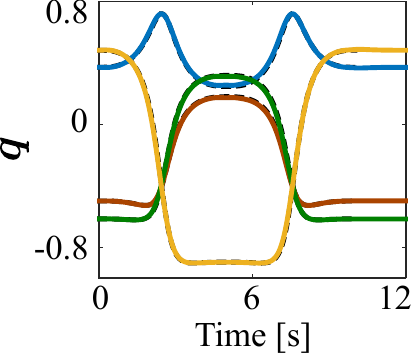}}}
    \subfigure[Angular velocity]{\centering {\includegraphics[width=0.16\textwidth]{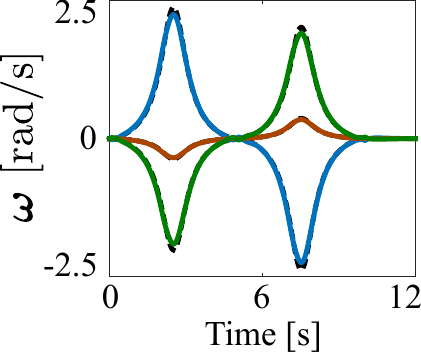}}}
    \subfigure[Angular velocity]{\centering {\includegraphics[width=0.16\textwidth]{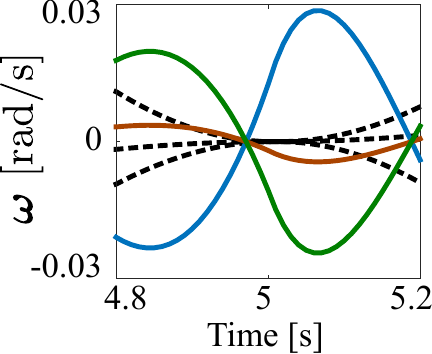}}}
    \subfigure[Delayed goal]{\centering {\includegraphics[width=0.16\textwidth]{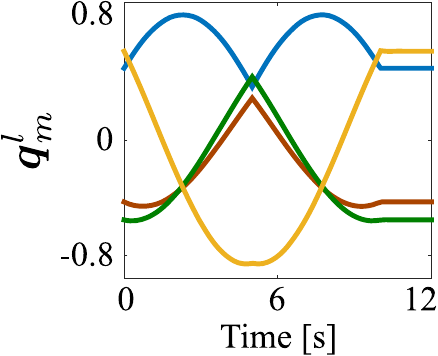}}}
    \subfigure[Orientation error]{\centering {\includegraphics[width=0.16\textwidth]{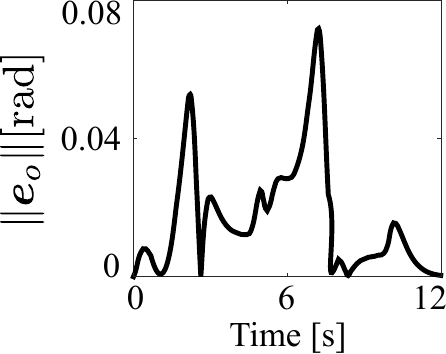}}}
    }%
    \vspace{-0.2cm}
    	\caption{Results obtained by applying the first (a)--(e), second (f)--(j), and third (k)--(o) approach to merge two DMPs trained on synthetic data.}
    	\label{fig:sim_3}
 \vspace{-0.5cm}
\end{figure*} 
\subsection{Synthetic data}
The aim of this experiment is to compare the behaviour of the proposed approaches when applied to generate an orientation trajectory. To this end, we pre-trained two orientation DMPs on synthetic data given by  two minimum jerk trajectories connecting $\bfq(0) = [0.247, 0.178, 0.318, -0.897]\tr$ with $\bfq_d^1 = [0.372, -0.499, -0.616, 0.482]\tr$  (intermediate goal) and $\bfq_d^1$ with $\bfq_d^2 = \bfq(0)$ (final goal). Each trajectory lasts for $T^1 = T^2 = 5\,$s (black dashed lines in Fig. \ref{fig:sim_3}(a)). Each DMP has $N=15$ kernels, $\tau = 1$, and $\bfK^q=10\bfI$. {These values are empirically set, while} $\bfD^q = 2\sqrt{\bfK^p}$ to have a critically damped system \cite{Weitschat16}. The sampling time is $\delta t = 0.01\,$s. The two orientation DMPs are trained to reach the respective goals $\bfq_d^1$ and $\bfq_d^2$ with zero velocity. We apply the approaches presented in Sec. \ref{sec:merge_dmp} to generate a smooth quaternion trajectory that starts and ends at $\bfq(0)$ while passing close to the ``intermediate goal'' $\bfq_d^1$, considered a via point. The performance of each approach is evaluated considering deformation, smoothness, and duration of the generated trajectory, as well as the distance to $\bfq_d^1$. 

Results obtained with the three approaches are shown in Fig. \ref{fig:sim_3}. For the first approach, we switch to the second DMP when the distance from the intermediate goal (via point) $\bfq_d^1$ is below $d_1=0.01\,$rad, i.e. after about $4.7\,$s (Fig. \ref{fig:sim_3}(d)). Alternatively, one can switch the primitives when the velocity is below a certain threshold as suggested in \cite{Pastor_09}. For the second approach, we run the first DMP for $T^1=5\,$s and then switch to the second one. The desired intermediate velocity is $\bfomega_d^1 = [0.01, 0.01, 0.01]\tr$~rad/s. The third approach does not require a switching rule between the DMPs and automatically treats $\bfq_d^1$ as a via point. As expected, all the generated trajectories converge to $\bfq_d^2$ (Fig. \ref{fig:sim_3}(a), (f), and (k)).

Error plots in Fig. \ref{fig:sim_3}(e), (j), and (o) show the deformation introduced by each approach. The first approach is the most accurate (maximum tracking error $e_{o,max}=0.012\,$rad), followed by the third approach ($e_{o,max}=0.072\,$rad). The second approach is the less accurate ($e_{o,max}=0.307\,$rad). The second approach partially sacrifices the accuracy to cross the via point $\bfq_d^1$ after $T^1\,$s ($\bfe_o(T^1)=0.001\,$rad) with velocity $\bfomega(T^1) \approx \bfomega_d^1$ (Fig. \ref{fig:sim_3}(h)). On the other hand, the third approach favors the overall accuracy passing $0.025\,$rad away from $\bfq_d^1$. The trajectory pass ``close'' to the intermediate goal, but the distance depends on the weights of the merged DMPs and cannot be decided a priori. In the first approach, the distance from intermediate goals is a tunable parameter. 

In the first approach, the distance to the goal affects the time duration of the generated trajectory. With the used distance $d_1=0.01\,$rad, the generated trajectory converges to $\bfq_d^2$ (distance below $0.001\,$rad) after $9.5\,$s. Hence, the execution is faster than the demonstration ($T=10\,$s). Bigger values of $d_1$ result in shorter trajectories and vice versa. The second approach produces a trajectory that, as the training data, converges in $10\,$s. Finally, the third approach generates a trajectory of $11.7\,$s, that is $1.7\,$s longer than the demonstrated one. In general, the third approach produces a trajectory that lasts more than the demonstration. This is because the sigmoidal clock signal in \eqref{eq:sigmoidaldecay}---and the effects of the forcing term---vanishes after $T+\delta\,$s, where $\delta$ depends on the steepness $\alpha_h$ of the sigmoid ($\alpha_h=1$ in this case).  Bigger values of $\alpha_h$ result in shorter trajectories and vice versa. 

All the tested approaches are able to generate smooth orientation trajectories with continuous velocities. Nevertheless, the third approach is the only one capable of generating continuous accelerations, while the others may create discontinuous accelerations around the switching point.

\begin{table*}[t!]
     \centering
     \caption{Comparison of the proposed approaches for motion primitives merging.}
     \vspace{-0.2cm}
     \label{tab:features}
     {\renewcommand\arraystretch{1.3} 
 	\begin{tabular}{ c||c|c|c|c|c }
 	 & Intermediate & Desired switch  & Change & {Smooth }  & Computational complexity \\
 	 &  goal crossing & velocity & DMP structure & {motion }  &  wrt original DMP \\
     \hline
     \hline 
     \textbf{First approach} & No & No & No &  Continuous velocity & Same \\
     \textbf{Second approach} & Yes & Yes & Yes & Continuous velocity & Same \\
     \textbf{Third approach} & No & No & Yes & Continuous acceleration & Linear with the number of DMPs\\
 \end{tabular}
 }
 \vspace{-0.5cm}
 \end{table*}
\subsection{Robot experiment}
This experiment compares the merging approaches in a real case where a robot adds sugar into a cup (see Fig. \ref{fig:snapshot}). The task consists of three motion primitives, namely \textit{1)} reach the sugar bowl and fill the spoon, \textit{2)} put the sugar into the cup, and \textit{3)} reach a final pose. The task is demonstrated by kinesthetic teaching and motion primitives are segmented using zero velocity crossing \cite{Fod02} with a velocity threshold {empirically set to} of $5\,$mm/s. The three DMPs are separately learned to reach the relative goal (last pose in the segment) with zero velocity. We use the same parameters as in the previous case. The robot is able to execute the task by stopping at each intermediate goal, but this takes $24.7\,$s that is $5\,$s longer than the demonstration. {Depending on the task at end, the longer execution time may cause issues. Therefore, we also consider the accuracy of the executionmovements and the success of the task to compare the different merging approaches.}  
\begin{figure}[b]
 \vspace{-0.5cm}
	\centering
	\includegraphics[width=\columnwidth]{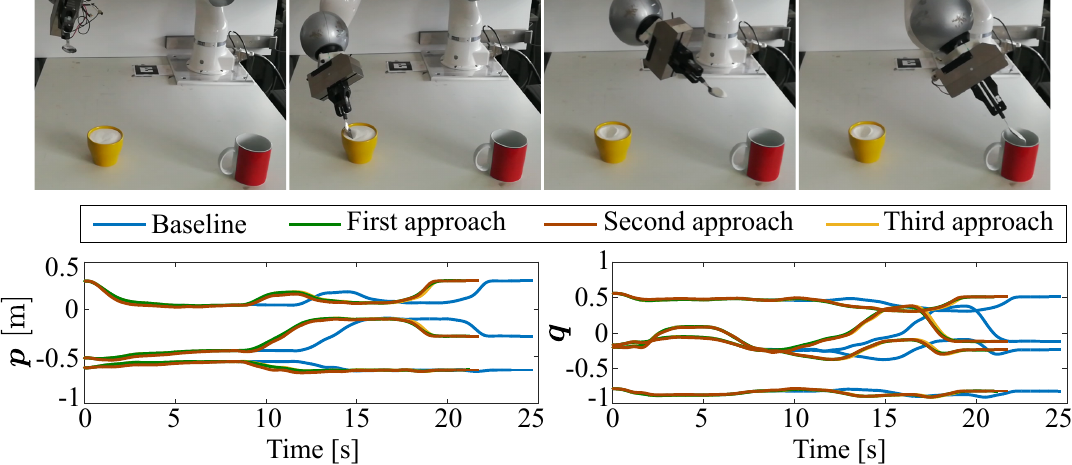}
	\vspace{-0.6cm}
    	\caption{{(Top) Successful execution of the add sugar task. (Bottom) Pose trajectories executed by the robot.}} 
    	\label{fig:snapshot}
\end{figure} 
For the first approach, we switch the DMP when the distance to the current intermediate goal is below $0.005\,$m (rad), allowing the robot to successfully execute the task in $19.4\,$s. The generated trajectory passes close to the intermediate goals (distance below $0.005\,$m (rad)) and accurately represents the demonstration (maximum errors are  $e_{p,max}=0.006\,$m and $e_{o,max}=0.035\,$rad). For the second approach, we set the desired crossing velocity to $0.005\,$m/s (rad/s) along each direction. The robot is able to cross the goals (distance below $0.001\,$m and $0.002\,$rad) but it hits the sugar bowl and fails the task (maximum errors are  $e_{p,max}=0.052\,$m and $e_{o,max}=0.073\,$rad). {The reason is that when the robot reaches the desired $z$ position it is outside the cup and then it touches the cup while reaching the desired $x$-$y$ position (Fig. \ref{fig:snapshot}).} It is worth noticing that the robot is able to execute the task if the crossing velocity is reduced, but this will increase the total execution time. The third approach allows the robot to successfully execute the task in $21.2\,$s. The generated trajectory passes close to the intermediate goals ($0.001\,$m and $0.02\,$rad from the first goal, $0.002\,$m and $0.015\,$rad from the second goal) and accurately represents the demonstration ($e_{p,max}=0.018\,$m and $e_{o,max}=0.05\,$rad).       
\subsection{Discussion}
Presented results on synthetic and real data have shown similarities and differences of the three merging approaches. Important features of each approach are summarized in Tab. \ref{tab:features}. The approach in Sec. \ref{subsec:first_approach} generates a trajectory that converges before the demonstration time. {The faster convergence may represent a problem, for instance when multiple robots are executing a cooperative task. This issue can be alleviated by increasing the time scaling factor $\tau$ to match the demonstrated time.} Among the three approaches, the first one is the easiest to implement since it does not require any change in the DMP structure and in the learning process. On the other hand, approaches two and three requires a moving target and, for approach two, a goal velocity. Hence, the first approach is preferable if standard DMPs were trained {and if the goal of a DMP corresponds to the start of the next one}. 

The second approach introduces a deformation in the generated trajectory. Depending on the desired final velocity, this deformation may not be negligible for the task at hand---as in the presented experiment where the robot touched the sugar bowl and failed the task. However, the second approach is the only capable of crossing the intermediate goals with a user defined velocity. {As shown in Fig. \ref{fig:sim_3}(c), (h), and (m), the second approach is the only one capable of crossing the goal with a user defined velocity.} This is of importance in dynamic tasks like hitting or batting. According to our analysis, the second approach is the best suited for such dynamic tasks.

{The third approach stacks kernels and weights of $L$ trained DMPs into one DMP. Assuming that each DMP has $N$ kernels, the resulting DMP has $NL$ kernels and $NL$ weights. Therefore,} the computational complexity of the third approach grows linearly with the number of DMPs, while the other two approaches have the same cost of a single DMP. {From a certain value of $L$ and $N$, that depends on the available hardware, the third approach is not able to generate the motion in real-time---typically $1$ to $10$~ms}. To alleviate this issue, one can start generating the trajectory using only the kernels of the first two DMPs. The kernels overlaps only in a neighborhood of the intermediate goal. Hence, after passing the intermediate goal, the kernels of first DMP can be replaced with those of the third one, and so on until the last primitive is reached. 
{The third approach is the only one that generates continuous accelerations. Compared to the first approach (Fig. \ref{fig:sim_3}(e)), the third approach slightly deviates from the demonstrated trajectory (Fig. \ref{fig:sim_3}(o)) because training data are smoothen to generate smooth accelerations. The third approach outperforms the first one if the velocity of the successive DMP is different from zero. In this case, the first approach starts the second DMP with a velocity close to zero which causes inaccuracies in reproducing the demonstration. The third approach, instead, generates a velocity at the switching point that is closer to the demonstrated one, resulting in a more accurate trajectory.}

\section{Conclusion}\label{sec:conclusion}
We presented three approaches to combine a set of motion primitives and generate a smooth trajectory for the robot. The approaches assume that each motion primitive is represented via second-order dynamical systems, the so-called dynamic movement primitives. In contrast to similar work in the field, we consider the orientation part of the motion. We represent the orientation via unit quaternions and exploit the mathematical properties of the quaternion space to rigorously define all the operations required to merge sequential movements. Presented approaches are evaluated both on synthetic and real data, showing that each approach has some distinctive features which make it well suited for specific tasks.  
{In the future, we plan to integrate the motion primitives merging approaches with the symbolic task compression in \cite{saveriano19} allowing a smooth execution of structured tasks.} 

\appendices
\section{}\label{eq:app_1}
Unit quaternions are elements of $\mathcal{S}^3$, the unit sphere in the $3$D space. A unit quaternion has four elements $\mathcal{Q} \triangleq \{\eta,\,\bfepsilon\}\in \mathcal{S}^3$, where $\eta$ is the scalar and $\bfepsilon$ is the vector part of the quaternion. The constraint $\eta^2 + \Vert \bfepsilon\Vert^2 = 1$ relates the scalar and the vector parts. For implementation reasons, a quaternion is represented as a $4$D vector $\bfq \triangleq [\eta, \, \bfepsilon\tr]\tr = [\eta,\,\epsilon_1,\,\epsilon_2,\,\epsilon_3]\tr$. 
The product of two quaternions is
\small
\begin{equation}
\bfq_1 \ast \bfq_2 =[\eta_1\eta_2 - \bfepsilon_1\tr\bfepsilon_2,\,(\eta_1\bfepsilon_2+\eta_2\bfepsilon_1+\bfS(\bfepsilon_1)\bfepsilon_2)\tr]\tr, \label{eq:quaternion_product}
\end{equation}
\normalsize
where $\bfS(\bfepsilon)\in\mathbb{R}^{3\times3}$ is a \textit{skew-symmetric} matrix.
The conjugate of a quaternion, i.e. the quaternion $\overline{\bfq}$ such that $\bfq\ast\overline{\bfq} = [1,\,0,\,0,\,0]\tr$, is defined as $\overline{\bfq} \triangleq [\eta,\,-\bfepsilon\tr]\tr \in \mathcal{S}^3$.
The time derivative of a quaternion is related to the angular velocity by the so-called \textit{quaternion propagation}
\small
\begin{equation}
\dot{\bfq} = \frac{1}{2}\tilde{\bfomega} \ast \bfq \rightarrow
\dot{\bfq} = \begin{bmatrix} \dot{\eta} \\ \dot{\bfepsilon} \end{bmatrix} = \left\lbrace
\begin{split}
\dot{\eta} &= -\frac{1}{2}\bfepsilon\tr\bfomega\\
\dot{\bfepsilon} &= \frac{1}{2}\left(\eta\bfI - \bfS(\bfepsilon)\right)\bfomega
\end{split}\right. ,
\label{eq:quaternion_propagation}
\end{equation}
\normalsize
where $\tilde{\bfomega} = [0, \bfomega\tr]\tr$ is a quaternion with zero scalar part and the angular velocity as vector part. 
The \textit{logarithmic map} 
\small
\begin{equation}
\bfr = \log(\bfq)
\begin{cases}
\arccos(\eta)\frac{\bfepsilon}{\Vert\bfepsilon\Vert},&\Vert\bfepsilon\Vert>0\\
[0,\,0,\,0]\tr,&\text{otherwise}
\end{cases} .
\label{eq:logarithmic_map}
\end{equation}
\normalsize
transforms a unit quaternion into a rotation vector $\bfr\in\mathbb{R}^{3}$. The logarithmic map is uniquely defined and continuously differentiable if the domain is limited to $\mathcal{S}^3/\{-1,\,[0,\,0,\,0]\tr\}$. A rotation vector is mapped into a unit quaternion by the \textit{exponential map} 
\small
\begin{equation}
\exp(\bfr) = 
\begin{cases}
\left[\cos(\Vert\bfr\Vert),\, \sin(\Vert\bfr\Vert)\frac{\bfr\tr}{\Vert\bfr\Vert}\right]\tr, &\Vert\bfr\Vert>0\\
[1,\,0,\,0,\,0]\tr,&\text{otherwise}
\end{cases} .
\label{eq:exponential_map}
\end{equation}
\normalsize
The exponential map is uniquely defined and continuously differentiable if the domain is limited to $0 \leq\Vert\bfr\Vert<\pi$.
The quaternion derivative \eqref{eq:quaternion_propagation} is integrated using the formula
\small
\begin{equation}
\begin{split}
\bfq(t+1) = \exp\left(\frac{\delta t}{2} \bfomega(t) \right) \ast \bfq(t),
\end{split}
\label{eq:quaterion_rate_integration}
\end{equation}
\normalsize
where $\delta t$ is the sampling time.

\bibliographystyle{IEEEtran}
\bibliography{mybib}
\end{document}